\theoremstyle{definition}
\newtheorem{proposition}{Proposition}
\newtheorem{theorem}{Theorem}[section]
\newtheorem{lemma}[theorem]{Lemma}
\DeclareMathOperator{\E}{\mathbb{E}}
\def\SNR{{\mathrm{SNR}}}
\def\y{{\mathbf y}}
\def\e{{\boldsymbol{\eta}}}
\def\x{{\boldsymbol{\beta}}}
\def\H{{\mathbf H}}
\def\C{{\mathbf C}}
\def\P{{\mathbf P}}
\def\R{{\mathbb{R}}}
\def\I{{\mathbf I}}
\def\r{{\mathbf r}}
\def\F{{\boldsymbol{\Phi}}}
\def\Si{{\boldsymbol{\Psi}}}
\def\w{{\mathbf w}}
\def\r{{\mathbf r}}
\def\A{{\mathbf H}}
\def\P{{\mathbf P}}
\def\R{{\mathbb{R}}}
\def\I{{\mathbf I}}
\def\a{{\mathbf a}}
\def\b{{\mathbf b}}
\def\r{{\mathbf r}}
\def\u{{\mathbf u}}
\def\c{{\mathbf z}}
\newcommand{\ts}{\textsuperscript}
\newcommand\norm[1]{\left\lVert#1\right\rVert}
\newcommand{\argmax}{\arg\!\max}
\newcommand{\argmin}{\arg\!\min}
\let\oldref\ref
\renewcommand{\ref}[1]{(\oldref{#1})}
\newcommand{\RNum}[1]{\uppercase\expandafter{\romannumeral #1\relax}}
\renewcommand{\fnum@figure}{Fig. \thefigure}
\title{Sparse recovery via Orthogonal Least-Squares under presence of Noise}
\name{Abolfazl Hashemi and Haris Vikalo}
\address{Department of Electrical and Computer Engineering\\
		University of Texas at Austin, Austin, TX, USA}
\begin{document}
%\ninept
%
\maketitle
\begin{abstract}
We consider the Orthogonal Least-Squares (OLS) algorithm for the recovery of a $m$-dimensional $k$-sparse signal from a low number of noisy linear measurements. 
% OLS sequentially orthogonalizes and selects columns of the coefficient matrix based on the correlation with a so-called residual vector. 
The Exact Recovery Condition (ERC) in bounded noisy scenario is established for OLS under certain condition on nonzero elements of the signal. The new result also improves the existing guarantees for Orthogonal Matching Pursuit (OMP) algorithm. In addition, This framework is employed to provide probabilistic guarantees for the case that the coefficient matrix is drawn at random according to Gaussian or Bernoulli distribution where we exploit some concentration properties. It is shown that under certain conditions, OLS recovers the true support in $k$ iterations with high probability. This in turn demonstrates that ${\cal O}\left(k\log m\right)$ measurements is sufficient for exact recovery of sparse signals via OLS.
\end{abstract}
\begin{keywords}
compressed sensing, orthogonal least-squares, sparse reconstruction, exact recovery condition, orthogonal matching pursuit%mutual incoherence property
\end{keywords}
%%%%%%%%%%%%%%%%%%%%%%%%%%%%%%%%%%%%%%%%%%%%%%%%%%%%%%%%%%%%%%%%%%
%%%%%%%%%%%%%%%%%%%%%%%%%%%%%%%%%%%%%%%%%%%%%%%%%%%%%%%%%%%%%%%%%%
\section{Introduction}
\label{sec:intro}
In many practical scenarios signal of interest can be modeled  as  a  sparse  solution  to  an  underdetermined  linear  systems of equations. Examples include sparse linear regression \cite{tipping2001sparse}, compressed sensing \cite{donoho2006compressed}, sparse channel
estimation in communication systems \cite{carbonelli2007sparse,barik2014sparsity}, compressive DNA microarrays \cite{parvaresh2008recovering} and a number of other applications in signal processing and machine learning \cite{lustig2007sparse,elad2010role,elhamifar2009sparse}. Consider the linear 
measurement model
\begin{equation} \label{eq:1}
{\y=\A\x+\e},
\end{equation}
where $\y \in\R^{n}$ denotes the vector of observations, $\A \in\R^{n \times m}$ is the coefficient matrix (i.e., a collection of features) assumed to be full rank, ${\bf e} \in\R^{n}$ is the additive observation noise vector, and $\x \in\R^{m}$ is a vector known to have at most $k$ non-zero components (i.e., $k$ is the sparsity level of $\x$).  Finding a sparse approximation to $\x$ leads to a cardinality-constraint optimization problem. In particular, we would like to solve the so-called $l_0$-constrained least-squares
\begin{equation}  \label{eq:2}
\begin{aligned}
& \underset{\x}{\text{minimize}}
&  \norm{\y-\A\x}^{2}_{2}
& &\text{subject to}
& & \norm{\x}_{0} \leq k.
\end{aligned}
\end{equation}
%\subsection{Prior Work}
The number of possible locations of non-zero entries in $\x$ scales combinatorially with $n$ which renders
\ref{eq:2} computationally challenging; in fact, the problem is NP-hard. 

Accelerated
approaches to find an approximate solution 
% can be generally classified into two categories. In first, in order to enable computationally efficient search for sparse signal, one can replace the non-convex cardinality-constrained optimization problem by a sparsity-promoting $l_1$-norm optimization
% % to obtain 
% % \begin{equation} 
% % \begin{aligned}
% % & \underset{\x}{\text{minimize}}
% % & \norm{\x}_{1}
% % & & \text{subject to}
% % & & \norm{\y - \A \x}_2 \leq \varepsilon,
% % \end{aligned}
% % \label{eq:L1}
% % \end{equation}
% % where $\varepsilon > 0$ is a predetermined measure of noise power and is zero in the noise-free setting where $\e=0$.
% It is shown in \cite{candes2005decoding} that a sufficiently sparse signal can be reconstructed exactly in the noise-free scenario where certain conditions are satisfied.
% However, while the convexity of $l_{1}$-norm enables finding the optimal solution 
% to the reformulated sparse vector recovery problem, the complexity of doing so (by 
% means of, e.g., iterative shrinkage-thresholding algorithms such as \cite{beck2009fast}, or alternating direction method of multipliers \cite{boyd2011distributed}) is polynomial in number of measurements which is usually prohibitive in practice where one
% deals with high dimensional data. 
%Second flavor of approaches for sparse approximation in linear models 
include a number of iterative heuristics that attempt to solve \ref{eq:2} greedily by identifying columns of $\A$ which correspond to non-zero components of $\x$ via locally optimal decisions. 
Among the greedy methods, orthogonal matching pursuit (OMP) algorithm \cite{pati1993orthogonal} has attracted particular attention in recent years. 
% OMP has an intuitive geometric interpretation and is characterized by high 
% speed and competitive performance. In each iteration, OMP chooses a column 
% having the maximum correlation with a so-called residual vector and adds to 
% the set of active columns; then an orthogonal projection of observation vector is calculated using the columns in current active set to form the residual vector needed for the next iteration of the algorithm. 
There have been 
numerous modifications of OMP proposed in the literature that enhance the performance of OMP \cite{donoho2012sparse,dai2009subspace,wang2012generalized}. The principal idea in these methods is to select multiple ``good" indices in each iteration in order to reduce the cost of identification step and recover the true support in fewer iterations.  Performance of OMP is evaluated in numerous scenarios and necessary and sufficient conditions for exact reconstruction are provided\cite{mo2012remark,tropp2004greed,cai2011orthogonal,das2011submodular,tropp2007signal,fletcher2012orthogonal}.
% Examples include, analysis under Restricted Isometry Property (RIP) \cite{wang2012recovery,mo2012remark}, conditions on Mutual Incoherence property (MIP) and Exact Recovery Condition (ERC) \cite{tropp2004greed,cai2011orthogonal}, conditions on submodularity ratio \cite{das2011submodular}, and sampling requirements for random measurements\cite{tropp2007signal,fletcher2012orthogonal}.

Orthogonal Least-Squares (OLS) method \cite{chen1989orthogonal} has drawn attention in recent years \cite{soussen2013joint,maymon2015viterbi,a2bol2016} and its provable performance is analyzed in some scenarios. In \cite{soussen2013joint} OLS is analyzed under the Exact Recovery Condition (ERC) for noiseless setup. Herzet et al. \cite{herzet2016relaxed} provided coherence-based conditions for sparse recovery of signals via OLS when nonzero components obey some decay. In \cite{herzet2013exact}, sufficient conditions for exact recovery from noise-free data is stated when a subset of optimal indices is available. However, the existing analysis and performance guarantees for OLS are predominantly limited  to the case of non-random measurements in noise-free setting.

In this paper, we establish ERC-based guarantees for OLS and show that given certain SNR conditions, OLS recovers location of nonzero elements of $\x$ in first $k$ iteration. Furthermore, a probabilistic result is provided for the case of random measurements with $l_2$-bounded additive noise where coefficient matrix $\A$ is drawn at random from Gaussian or Bernoulli distribution. Specifically, we find a lower bound on the probability of sparse recovery in  $k$ iterations when the nonzero element of unknown vector with smallest magnitude satisfies certain condition. Consequently, we demonstrate that with ${\cal O}\left(k\log m\right)$ measurements OLS will succeed with probability arbitrarily close to one.

The rest of the paper is organized as follows. Section 2 reviews the preliminaries. In section 3, we establish sufficient conditions on exact support recovery from perturbed measurements. Section 4 provides our results on performance
guarantee of OLS from linear random
measurements.  Some concluding remarks are provided in Section 5.
%%%%%%%%%%%%%%%%%%%%%%%%%%%%%%%%%%%%%%%%%%%%%%%%%%%%%%%%%%%%%%%%%%%%%%%%%%%%%%%%%
\section{Preliminaries and Notations} \label{sec:ols}
In this paper, Bold capital letters refer to matrices and bold lowercase letters represent vectors. For matrix $\A \in \R^{n\times m}$ with full column rank, i.e., $m>n$, $\A_{ij}$ denotes the entry in the $i\ts{th}$ row and column $j\ts{th}$, $\a_j$ refers to  the $j\ts{th}$ 
column of $\A$, and $\A_k \in \R^{n\times k}$ is one of the ${m}\choose{k}$ submatrices of $\A$. ${\cal L}_\A$ is the subspace spanned by columns of $\A$. ${\P}_\A^{\bot}=\I-\A \A^\dagger$ is the orthogonal projection operator onto orthogonal complement of ${\cal L}_\A$ where
$\A^\dagger=\left(\A^{\top}\A\right)^{-1}\A^{\top}$ is the Moore-Penrose pseudo-inverse of $\A$ and $\I$ is the identity matrix whose dimension is equal to the number of rows in $\A$. Similar notations are defined for ${\cal L}_k$, $\A_{k}^\dagger$, and ${\P}_k$ where we drop the subscript $\A$ for simplicity.  Specifically, ${\P}_k$ is the projection operator onto ${\cal L}_k$. 

Let ${\cal I}=\{1,\dots,m\}$ be the set of all indices, ${\cal S}_{opt}=\{1,\dots,k\}$ be the set of indices corresponding to nonzero elements of $\x$, and ${\cal S}_i$ be the set of selected indices at the end of $i\ts{th}$ iteration of OLS. For the sets ${\cal T}_1 \subset {\cal I}$ and ${\cal T}_2 \subset {\cal I}$ define
%\begin{equation} \label{eq:bforols}  
$\b_{j}^{{\cal T}_1}=
\frac{\P_{{\cal T}_1}^\bot\a_j}{\norm{\P_{{\cal T}_1}^\bot \a_j}_2},\hspace{0.2cm}  j\in {\cal T}_2$
%{\cal I}\backslash{\cal S}_i. %\\ 
%0 & j\in {\cal S}_i
%\end{equation}
where $\P_{{\cal T}_1}^\bot$ is the orthogonal projection matrix onto orthogonal complement of subspace spanned by columns of $\A$ with indices in ${\cal T}_1$. $\mathbbm{1}\{.\}$ is the indicator function and is equal to 1 if its argument holds and zero otherwise.

For a scalar random variable $X$, the notation $X \sim {\cal B}(\frac{1}{2},\pm 1)$ denotes that $X$ is a Bernoulli random variable and takes values $1$ and $-1$ with equal probability. For non-scalar object such as matrix $\A$, $\A \sim {\cal N}\left(0,\frac{1}{n}\right)$ means entries of $\A$ are drawn independently according to a zero-mean Gaussian distribution with variance $\frac{1}{n}$. Similar definition holds for $\A \sim {\cal B}(\frac{1}{2},\pm \frac{1}{\sqrt{n}})$.

We recall the principles of the OLS algorithm. OLS sequentially projects columns of $\A$ onto a residual vector and selects the column that
leads to the smallest residual norm. Specifically, OLS chooses a new index $j_s$ in $i\ts{th}$ iteration by employing the following criterion:
\begin{equation}\label{eq:ools}
{j}_{s}=\argmin_{j \in {\cal I}\backslash {\cal S}_{i-1}}{\norm{\y - \A_{{\cal S}_{i-1}\cup\{j\}}\A_{{\cal S}_{i-1}\cup\{j\}}^{\dagger}\y}_2}
\end{equation}
This procedure is computationally more expensive than OMP since in 
addition to solving a least-square problem to update the residual vector, orthogonal projection of each column 
needs to be found at each step of OLS. 
% Perhaps in part due to this increase in complexity, OLS has not played 
% as prominent role in sparse signal recovery literature as OMP did.
Note that the performances of OLS and OMP are identical when the columns of $\A$ are orthogonal. \footnote{In fact, orthogonality of the columns of A leads to a modular objective
function in \ref{eq:2}, implying optimality of both methods when $\e=0$.} However, for coherent and Redundant dictionaries OLS outperforms OMP. See \cite{herzet2013exact} for a detailed discussion
It is shown in \cite{hashemi2016sparse} that index selection criterion in  \ref{eq:ools} can alternatively be written as 
\begin{equation}\label{eq:nols}
j_s=\argmax_{j \in {\cal I}\backslash {\cal S}_{i-1}}{\left|\r_{i-1}^{\top}\frac{{\bf P}_{i-1}^{\bot}{\bf a}_{j}}{\norm{{\bf P}_{i-1}^{\bot}{\bf a}_{j}}_2}\right|}
\end{equation}
where $\r_{i-1}$ is the residual vector in $i\ts{th}$ iteration. In addition, projection matrix needed for subsequent iteration is related to the current projection matrix by the following recursion,
\begin{equation}
{\P}_{i+1}^{\bot}={\P}_i^{\bot}-\frac{{\P}_i^{\bot}\a_{j_s}\a_{j_s}^{\top} {\P}_i^{\bot}}{\norm{{\P}_i^{\bot}\a_{j_s}}_2^2}.
\label{eq:perp}
\end{equation}

Before formalizing the main results, we start by some useful lemmas that are employed in the proofs of main theorems.

% \begin{lemma} \label{lem:23}
% \textit{Assume conditions stated in lemma \oldref{lem:23} are satisfied. Let $c_0(\epsilon)=\frac{\epsilon^2}{4}-\frac{\epsilon^3}{6}$. Then,
% \begin{equation}
% \Pr\{\big|\norm{\P_k\u}_2^2-\frac{k}{n}\E\norm{\u}_2^2\big|\leq\epsilon\frac{k}{n}\E\norm{\u}_2^2\}\geq 1-2e^{-kc_0(\epsilon)}.
% \end{equation}
% }
% \end{lemma}
\begin{lemma}\label{lem:nois} 
\textit{Let $\A_1$, $\A_2$, and $\C$ be full rank tall matrices such that $\C=[\A_1,\A_2]$. Then for $i=1,2$}
%\begin{subequations}
\begin{equation} \label{eq:sigA}
\begin{aligned} 
&\sigma_{\min}\left(\A_i\right)\geq \sigma_{\min}\left(\C\right), &\sigma_{\max}\left(\A_i\right)\leq \sigma_{\max}\left(\C\right)
\end{aligned}
\end{equation}
% \begin{equation}
% \begin{aligned}
% &\sigma_{\min}\left(\B\right)\geq \sigma_{\min}\left(\C\right), &\sigma_{\max}\left(\B\right)\leq \sigma_{\max}\left(\C\right)
% \end{aligned}
% \end{equation}
% \end{subequations}
\end{lemma}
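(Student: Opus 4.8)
The plan is to rely on the Courant--Fischer variational characterization of the extreme singular values. For any tall full-rank matrix $\mathbf{M}\in\R^{n\times p}$ we have $\sigma_{\min}(\mathbf{M})=\min_{\norm{\mathbf{x}}_2=1}\norm{\mathbf{M}\mathbf{x}}_2$ and $\sigma_{\max}(\mathbf{M})=\max_{\norm{\mathbf{x}}_2=1}\norm{\mathbf{M}\mathbf{x}}_2$, the minimum being strictly positive precisely because $\mathbf{M}$ has full column rank. The single observation driving the proof is that multiplying $\C=[\A_1,\A_2]$ by a vector whose support is confined to the block of columns belonging to $\A_i$ reproduces $\A_i$ acting on the corresponding sub-vector.

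Concretely, for $i=1$ take an arbitrary unit vector $\mathbf{x}_1$ of the appropriate length and form the padded vector $\mathbf{x}=[\mathbf{x}_1^{\top},\mathbf{0}^{\top}]^{\top}$, which is again a unit vector and obeys $\C\mathbf{x}=\A_1\mathbf{x}_1$. Hence $\norm{\A_1\mathbf{x}_1}_2=\norm{\C\mathbf{x}}_2\in[\sigma_{\min}(\C),\sigma_{\max}(\C)]$ for every such $\mathbf{x}_1$. Minimizing over $\mathbf{x}_1$ on the unit sphere gives $\sigma_{\min}(\A_1)\geq\sigma_{\min}(\C)$, and maximizing gives $\sigma_{\max}(\A_1)\leq\sigma_{\max}(\C)$. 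The case $i=2$ is handled identically with the padding $\mathbf{x}=[\mathbf{0}^{\top},\mathbf{x}_2^{\top}]^{\top}$; alternatively, since permuting the columns of $\C$ leaves all its singular values unchanged, it is enough to argue for the leading block.

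An equivalent and equally short route is to invoke Cauchy's eigenvalue interlacing theorem: the Gram matrix $\A_i^{\top}\A_i$ is a principal submatrix of $\C^{\top}\C$, so the eigenvalues of $\A_i^{\top}\A_i$ interlace those of $\C^{\top}\C$; taking square roots of the extreme eigenvalues yields the claimed inequalities between singular values. Either way the argument is elementary, and I do not foresee a genuine obstacle --- the only point worth stating carefully is that full column rank of $\C$, inherited by $\A_1$ and $\A_2$, is what guarantees that the extremal values are actually attained and that $\sigma_{\min}(\cdot)$ denotes the smallest of the $p$ singular values rather than $0$. This lemma will then simply be quoted later when relating the conditioning of the optimal submatrix $\A_k$ to that of its sub-blocks.
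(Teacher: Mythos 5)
The paper states this lemma without providing any proof (it is treated as a standard fact), so there is nothing to compare against; judged on its own, your argument is correct and complete. The zero-padding argument via the variational characterization $\sigma_{\min}(\mathbf{M})=\min_{\norm{\mathbf{x}}_2=1}\norm{\mathbf{M}\mathbf{x}}_2$, $\sigma_{\max}(\mathbf{M})=\max_{\norm{\mathbf{x}}_2=1}\norm{\mathbf{M}\mathbf{x}}_2$ is exactly the right elementary route, and your alternative via Cauchy interlacing applied to the principal submatrix $\A_i^{\top}\A_i$ of $\C^{\top}\C$ is equally valid; your closing remark that full column rank of $\C$ is what makes $\sigma_{\min}$ refer to the smallest of the $p$ nonzero singular values (rather than $0$) is the only point of care, and you handle it correctly.
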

\begin{lemma} \label{lem:3}
\textit{The noise term in \ref{eq:1} can equivalently be written as 
\begin{equation}
\e=\bar{\A}\w+\e^\bot
\end{equation}
where $\e^\bot=\P_k^\bot\e$, $\w=\bar{\A}^\dagger\e$, and $\bar{\A}$ is a submatrix with all true columns of $\A$. In addition, for $(i+1)\ts{st}$ iteration
\begin{subequations}
\begin{equation}
\r_i=\e^\bot+\P_i^\bot\bar{\A}_{i^c}\c_{i^c}
\end{equation}
\begin{equation}
\|\r_i\|_2^2=\|\e^\bot\|_2^2+\|\P_i^\bot\bar{\A}_{i^c}\c_{i^c}\|_2^2
\end{equation}
\end{subequations}}
where $\c=\bar{\x}+\w$, $\bar{\x}$ corresponds to nonzero elements of $\x$, and subscript $i^c$ denotes the set of optimal columns that have not been chosen in first $i$ iterations.
\end{lemma}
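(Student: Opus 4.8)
The plan is to reduce everything to elementary properties of the orthogonal projectors $\P_k^\bot$ and $\P_i^\bot$ together with the nesting ${\cal L}_i \subseteq {\cal L}_k$. The first identity is immediate: since $\bar{\A}$ collects the $k$ columns of $\A$ indexed by ${\cal S}_{opt}$, the projector onto ${\cal L}_k$ is $\P_k=\bar{\A}\bar{\A}^\dagger$, so splitting $\e=\P_k\e+\P_k^\bot\e$ gives $\e=\bar{\A}(\bar{\A}^\dagger\e)+\e^\bot=\bar{\A}\w+\e^\bot$ by the definitions of $\w$ and $\e^\bot$.

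Next I would rewrite the observation vector. Because $\x$ is supported on ${\cal S}_{opt}$ one has $\A\x=\bar{\A}\bar{\x}$, so substituting the noise decomposition into \ref{eq:1} yields $\y=\bar{\A}(\bar{\x}+\w)+\e^\bot=\bar{\A}\c+\e^\bot$. From \ref{eq:ools}, the residual at the end of iteration $i$ is the projection of $\y$ onto ${\cal L}_i^\bot$, i.e. $\r_i=\P_i^\bot\y=\P_i^\bot\bar{\A}\c+\P_i^\bot\e^\bot$. Here I invoke the standing hypothesis of the analysis that the first $i$ selected indices are all optimal, i.e. ${\cal S}_i\subseteq{\cal S}_{opt}$, equivalently ${\cal L}_i\subseteq{\cal L}_k$. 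Partitioning $\bar{\A}\c=\bar{\A}_i\c_i+\bar{\A}_{i^c}\c_{i^c}$ according to whether a true column has already been picked, the block $\bar{\A}_i\c_i$ lies in ${\cal L}_i$ and is annihilated by $\P_i^\bot$; and since $\e^\bot\in{\cal L}_k^\bot\subseteq{\cal L}_i^\bot$ we get $\P_i^\bot\e^\bot=\e^\bot$. Combining these two observations gives $\r_i=\e^\bot+\P_i^\bot\bar{\A}_{i^c}\c_{i^c}$, which is the first displayed equation.

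For the squared-norm identity it suffices to show that the two terms of this sum are orthogonal. Writing $\P_i^\bot=\I-\P_i$ and using ${\cal L}_i\subseteq{\cal L}_k$, the vector $\P_i^\bot\bar{\A}_{i^c}\c_{i^c}=\bar{\A}_{i^c}\c_{i^c}-\P_i\bar{\A}_{i^c}\c_{i^c}$ is a difference of two elements of ${\cal L}_k$ and hence itself lies in ${\cal L}_k$; as $\e^\bot\perp{\cal L}_k$, the cross term vanishes and Pythagoras gives $\|\r_i\|_2^2=\|\e^\bot\|_2^2+\|\P_i^\bot\bar{\A}_{i^c}\c_{i^c}\|_2^2$. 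I do not anticipate a genuine obstacle here; the only subtlety is to state up front the hypothesis ${\cal S}_i\subseteq{\cal S}_{opt}$ under which the symbols $\bar{\A}_{i^c},\c_{i^c}$ are meaningful and under which $\P_i^\bot$ fixes $\e^\bot$ — once that is in place, the remainder is routine bookkeeping with nested orthogonal projectors.
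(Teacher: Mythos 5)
Your proof is correct and is exactly the argument the paper intends (the lemma is in fact stated there without proof): decompose $\e$ via $\P_k=\bar{\A}\bar{\A}^\dagger$, use ${\cal L}_i\subseteq{\cal L}_k$ to annihilate the already-selected block and to fix $\e^\bot$ under $\P_i^\bot$, and invoke Pythagoras since $\P_i^\bot\bar{\A}_{i^c}\c_{i^c}\in{\cal L}_k$ while $\e^\bot\perp{\cal L}_k$. You are also right to make explicit the hypothesis ${\cal S}_i\subseteq{\cal S}_{opt}$, which the paper leaves implicit but relies on throughout.
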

\begin{lemma} \label{lem:23}
\textit{Assume $\A \sim {\cal N}(0,1/n)$ or $\A \sim {\cal B}(\frac{1}{2},\pm \frac{1}{\sqrt{n}})$
. Let $\A_k\in\R^{n\times k}$ be a submatrix of $\A$. Then, $\forall {\bf u}\in \R^{n}$ statistically independent of $\A_k$ drawn according to $\u \sim {\cal N}(0,1/n)$ or $\u \sim {\cal B}(\frac{1}{2},\pm \frac{1}{\sqrt{n}})$, it holds that
$\E\norm{\P_k\u}_2^2=\frac{k}{n}\E\norm{\u}_2^2$. In addition, let $c_0(\epsilon)=\frac{\epsilon^2}{4}-\frac{\epsilon^3}{6}$. Then,
\begin{equation}
\Pr\{\big|\norm{\P_k\u}_2^2-\frac{k}{n}\E\norm{\u}_2^2\big|\leq\epsilon\frac{k}{n}\E\norm{\u}_2^2\}\geq 1-2e^{-kc_0(\epsilon)}.
\end{equation}
}
\end{lemma}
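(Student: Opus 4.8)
\noindent\emph{Proof strategy.} After conditioning on $\A_k$ the operator $\P_k$ is a fixed orthogonal projector of rank $k$ (a.s.), and every subsequent quantity depends only on the law of $\u$; the distribution of $\A$ plays no further role, so it suffices to treat the two cases $\u\sim\mathcal{N}(0,\tfrac1n)$ and $\u\sim\mathcal{B}(\tfrac12,\pm\tfrac1{\sqrt n})$. The mean is immediate: since $\u$ is independent of $\A_k$ and $\E\,\u\u^\top=\tfrac1n\I$ in both models, $\E[\|\P_k\u\|_2^2\mid\A_k]=\mathrm{tr}(\P_k\,\E\,\u\u^\top)=\tfrac1n\mathrm{tr}(\P_k)=\tfrac kn$, which is unchanged by averaging over $\A_k$, and $\E\|\u\|_2^2=\mathrm{tr}(\tfrac1n\I)=1$, giving $\E\|\P_k\u\|_2^2=\tfrac kn\E\|\u\|_2^2$.

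For the deviation bound, factor $\P_k=\mathbf{Q}\mathbf{Q}^\top$ with $\mathbf{Q}\in\R^{n\times k}$ orthonormal, so $\|\P_k\u\|_2^2=\|\mathbf{Q}^\top\u\|_2^2$. If $\u$ is Gaussian then $\mathbf{Q}^\top\u\sim\mathcal{N}(0,\tfrac1n\I_k)$, hence $n\|\mathbf{Q}^\top\u\|_2^2\sim\chi^2_k$, and the statement is the classical $\chi^2$ tail estimate. Carrying it out: Chernoff bounds on the $\chi^2_k$ moment generating function, optimized over the free parameter, give $\Pr\{\chi^2_k\ge(1+\epsilon)k\}\le\exp(-\tfrac k2(\epsilon-\ln(1+\epsilon)))$ and $\Pr\{\chi^2_k\le(1-\epsilon)k\}\le\exp(-\tfrac k2(-\epsilon-\ln(1-\epsilon)))$, and the elementary inequalities $\epsilon-\ln(1+\epsilon)\ge\tfrac{\epsilon^2}{2}-\tfrac{\epsilon^3}{3}$ and $-\epsilon-\ln(1-\epsilon)\ge\tfrac{\epsilon^2}{2}-\tfrac{\epsilon^3}{3}$ on $(0,1)$ turn each tail into $e^{-kc_0(\epsilon)}$; a union bound over the two tails produces the factor $2$, and since the resulting estimate does not depend on which $\A_k$ we conditioned on, it holds unconditionally.

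The Bernoulli law for $\u$ is where the real work lies. Writing $\u=\tfrac1{\sqrt n}\boldsymbol\epsilon$ with $\boldsymbol\epsilon$ Rademacher, one has $\|\mathbf{Q}^\top\u\|_2^2-\tfrac kn=\tfrac1n\boldsymbol\epsilon^\top(\P_k-\mathrm{diag}(\P_k))\boldsymbol\epsilon$, a centered order-two Rademacher chaos; the coordinates $q_i^\top\u$ are uncorrelated but not independent, so the factorized moment generating function used above is unavailable. I would instead bound $\E\exp(\lambda\,\boldsymbol\epsilon^\top\P_k\boldsymbol\epsilon)$ directly, exploiting that $\P_k$ is a projector with $\|\P_k\|_F^2=k$ and $\|\P_k\|_{\mathrm{op}}=1$ — e.g.\ along the lines of Achlioptas' analysis of $\pm1$ random projections, or via the Hanson--Wright inequality — and then optimize over $\lambda$. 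The delicate point, and the step I expect to be the main obstacle, is to show that this moment generating function is dominated by that of a $\tfrac1n\chi^2_k$ variable, so that the \emph{same} exponent $c_0(\epsilon)$ survives rather than merely some $\Omega(k\epsilon^2)$ bound with a degraded constant; this is essentially the binary-coin Johnson--Lindenstrauss phenomenon carried over to orthogonal projections.
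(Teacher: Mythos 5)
The paper never actually proves this lemma --- it is stated as a preliminary and implicitly outsourced to the concentration-of-measure literature (the same family of results cited later from Baraniuk et al.\ for the singular-value bound) --- so there is no in-paper argument to compare yours against; your proposal has to stand on its own. The reduction to conditioning on $\A_k$, the trace computation for the mean, and the Gaussian case are all correct and complete: once $\P_k=\mathbf{Q}\mathbf{Q}^\top$ with $\mathbf{Q}^\top\u\sim{\cal N}(0,\tfrac1n\I_k)$, the quantity $n\|\P_k\u\|_2^2$ is exactly $\chi^2_k$, and your Chernoff bounds together with $\epsilon-\ln(1+\epsilon)\geq\tfrac{\epsilon^2}{2}-\tfrac{\epsilon^3}{3}$ and $-\epsilon-\ln(1-\epsilon)\geq\tfrac{\epsilon^2}{2}-\tfrac{\epsilon^3}{3}$ do reproduce the exponent $k c_0(\epsilon)$ with $c_0(\epsilon)=\tfrac{\epsilon^2}{4}-\tfrac{\epsilon^3}{6}$. (One small caveat you gloss over: for Bernoulli $\A$ the submatrix $\A_k$ is full rank only with probability $1$ minus an exponentially small quantity, not almost surely, so $\mathrm{tr}(\P_k)=k$ and hence the exact equality for the mean holds only on that event.)

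The genuine gap is the Bernoulli law for $\u$, and you have identified it yourself without closing it. This case cannot be waved through: in the proof of Theorem 4.1 the lemma is applied with $\u=\a_j$ a column of $\A$, so when $\A\sim{\cal B}(\tfrac12,\pm\tfrac1{\sqrt n})$ the Rademacher case is the one actually being used. Your sketch correctly observes that the coordinates of $\mathbf{Q}^\top\u$ are uncorrelated but dependent, so the product-form moment generating function is unavailable, and that Hanson--Wright or a generic Rademacher-chaos bound yields only $2\exp(-c\,k\epsilon^2)$ for an unspecified universal constant $c$, not the specific $c_0(\epsilon)$ claimed in the statement. The Achlioptas computation that produces $c_0(\epsilon)$ for binary coins lives in a different setting --- there the randomness sits in $k$ i.i.d.\ rows of the projection matrix and the vector is fixed, whereas here the $k$ rows of $\mathbf{Q}^\top$ are orthonormal but arbitrary and the randomness sits in $\u$ --- so the moment-generating-function domination you hope for is not a known black-box result and is the entire content of the lemma in this case. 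As written, the proposal proves the lemma for Gaussian $\u$ only; either the Bernoulli case needs a worked-out chaos estimate (accepting a possibly degraded constant in place of $c_0$, which would then propagate into Theorem 4.1), or the lemma's claim of the exact exponent $k c_0(\epsilon)$ for Bernoulli $\u$ should be treated as unverified.
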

%%%%%%%%%%%%%%%%%%%%%%%%%%%%%%%%%%%%%%%%%%%%%%%%%%%%%%%%%%%%%%%%%%%%%%%%%%%%%%%%%SUFFICIENT CONDITIONS OF EXACT
% RECOVERY FOR OLS
\section{Exact Recovery Condition for OLS} \label{sec:mip}
The first analysis of OMP is due to Tropp \cite{tropp2004greed} where he provided sufficient conditions for exact recovery of OMP in noise-free setting. Specifically, let $\bar{\A} \in \R^{n\times k}$ be a matrix with columns indexed by ${\cal S}_{opt}$ and $\widetilde{\A}\in \R^{n\times (m-k)}$ correspond to 
columns indexed by ${\cal I}\backslash {\cal S}_{opt}$. Then if $\e=0$ and
\begin{equation}
M_{OMP}=\|\bar{\A}^\dagger \widetilde{\A}\|_{1,1}<1, 
\end{equation}
OMP recovers support of $\x$ exactly in $k$ iterations. This condition is called Exact Recovery Condition (ERC). Similar results can be established for OLS. In particular, the following proposition holds for $\e=0$.
\begin{proposition}
\textit{Let $\F_{{\cal S}_i}=[\b_{j}^{{\cal S}_i}]\in \R^{n\times(k-i)}$, $j\in {\cal S}_{opt}\backslash{\cal S}_i$ and $\Si_{{\cal S}_i}=[\b_{j}^{{\cal S}_i}]\in\R^{n\times(m-k)}$, $j\in {\cal I}\backslash{\cal S}_{opt}$. Suppose OLS identified true columns in first $i$ iterations. If 
\begin{equation}
M_{i+1}=\|\F_{{\cal S}_i}^\dagger\Si_{{\cal S}_i}\|_{1,1}<1,
\end{equation}
OLS chooses a true column in $(i+1)\ts{st}$ iteration.}
\end{proposition}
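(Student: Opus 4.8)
The plan is to transcribe Tropp's exact-recovery argument to the OLS selection rule \ref{eq:nols}, the only genuinely new ingredient being a characterization of where the residual $\r_i$ lives. Since OLS has picked only optimal columns so far, $\mathcal{S}_i\subset\mathcal{S}_{opt}$, and the noise-free specialization of Lemma~\ref{lem:3} ($\e=0$, hence $\w=0$ and $\c=\bar\x$) gives $\r_i=\P_i^\bot\bar{\A}_{i^c}\bar\x_{i^c}$. Each $\b_j^{\mathcal{S}_i}=\P_i^\bot\a_j/\norm{\P_i^\bot\a_j}_2$ is a positive multiple of $\P_i^\bot\a_j$, so the columns of $\F_{\mathcal{S}_i}$ span exactly $\mathrm{range}(\P_i^\bot\bar{\A}_{i^c})$; in particular $\r_i\in\mathrm{range}(\F_{\mathcal{S}_i})$. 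Also $\F_{\mathcal{S}_i}$ has full column rank — its columns are the images under $\P_i^\bot$ of the $k-i$ not-yet-selected columns of the full-rank matrix $\bar{\A}$, and these images are linearly independent — so $\F_{\mathcal{S}_i}^\dagger$ is well defined and $\F_{\mathcal{S}_i}\F_{\mathcal{S}_i}^\dagger$ is the orthogonal projector onto $\mathrm{range}(\F_{\mathcal{S}_i})$, which therefore fixes $\r_i$.

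Next I would bound the OLS objective. For any $j\notin\mathcal{S}_i$, using that $\F_{\mathcal{S}_i}\F_{\mathcal{S}_i}^\dagger$ is symmetric and fixes $\r_i$,
\begin{equation}
\r_i^\top\b_j^{\mathcal{S}_i}=\r_i^\top\F_{\mathcal{S}_i}\F_{\mathcal{S}_i}^\dagger\b_j^{\mathcal{S}_i}=\left(\F_{\mathcal{S}_i}^\top\r_i\right)^\top\left(\F_{\mathcal{S}_i}^\dagger\b_j^{\mathcal{S}_i}\right),
\end{equation}
and H\"older's inequality gives $|\r_i^\top\b_j^{\mathcal{S}_i}|\le\norm{\F_{\mathcal{S}_i}^\top\r_i}_\infty\norm{\F_{\mathcal{S}_i}^\dagger\b_j^{\mathcal{S}_i}}_1$. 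When $j\in\mathcal{I}\setminus\mathcal{S}_{opt}$, the vector $\b_j^{\mathcal{S}_i}$ is a column of $\Si_{\mathcal{S}_i}$, hence $\norm{\F_{\mathcal{S}_i}^\dagger\b_j^{\mathcal{S}_i}}_1\le\norm{\F_{\mathcal{S}_i}^\dagger\Si_{\mathcal{S}_i}}_{1,1}=M_{i+1}$, since $\|\cdot\|_{1,1}$ equals the largest column $\ell_1$ norm. Therefore $\max_{j\in\mathcal{I}\setminus\mathcal{S}_{opt}}|\r_i^\top\b_j^{\mathcal{S}_i}|\le M_{i+1}\norm{\F_{\mathcal{S}_i}^\top\r_i}_\infty$. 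Conversely, the columns of $\F_{\mathcal{S}_i}$ are precisely the $\b_j^{\mathcal{S}_i}$ with $j\in\mathcal{S}_{opt}\setminus\mathcal{S}_i$, so $\max_{j\in\mathcal{S}_{opt}\setminus\mathcal{S}_i}|\r_i^\top\b_j^{\mathcal{S}_i}|=\norm{\F_{\mathcal{S}_i}^\top\r_i}_\infty$.

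To conclude, observe that $\norm{\F_{\mathcal{S}_i}^\top\r_i}_\infty>0$: since $\bar\x_{i^c}\ne 0$ and $\P_i^\bot\bar{\A}_{i^c}$ has full column rank we have $\r_i\ne 0$, and being a nonzero vector in $\mathrm{range}(\F_{\mathcal{S}_i})$ with $\F_{\mathcal{S}_i}$ of full column rank forces $\F_{\mathcal{S}_i}^\top\r_i\ne 0$. Thus $M_{i+1}<1$ makes the maximum of $|\r_i^\top\b_j^{\mathcal{S}_i}|$ over $j\in\mathcal{I}\setminus\mathcal{S}_{opt}$ strictly smaller than its maximum over $j\in\mathcal{S}_{opt}\setminus\mathcal{S}_i$, so the $\argmax$ in \ref{eq:nols} is attained at a true index and OLS selects an optimal column in iteration $i+1$. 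The one step worth spelling out carefully — and the only place the OLS-specific geometry really enters — is the claim $\r_i\in\mathrm{range}(\F_{\mathcal{S}_i})$ together with the resulting factorization of the inner product; everything else is H\"older's inequality and bookkeeping about which vectors are columns of $\F_{\mathcal{S}_i}$ versus $\Si_{\mathcal{S}_i}$.
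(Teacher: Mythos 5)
Your proof is correct, and it is essentially the argument the paper itself uses: the paper never proves Proposition 1 separately, but the noise-free specialization of its Theorem 3.1 proof is exactly your chain — the residual lies in the range of $\F_{{\cal S}_i}$, so one inserts the projector $\F_{{\cal S}_i}\F_{{\cal S}_i}^\dagger$ and bounds the ratio of the two $\ell_\infty$ maxima by $\|\F_{{\cal S}_i}^\dagger\Si_{{\cal S}_i}\|_{1,1}$ (the paper phrases this via the $\|\cdot\|_{\infty,\infty}$ operator norm, you via columnwise H\"older, which is the same estimate). Your added care about full column rank of $\F_{{\cal S}_i}$ and the strict positivity of $\|\F_{{\cal S}_i}^\top\r_i\|_\infty$ only makes explicit what the paper leaves implicit.
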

The condition $M_i<1$ is the ERC of OLS at $i\ts{th}$ iteration. Here we extend this results to the case that the measurements are perturbed with additive noise where $\|\e\|_2\leq \epsilon_\e$. The following theorem summarizes our main results.
\begin{theorem}\label{thm:erc}
\textit{Suppose $\|\e\|_2 \leq \epsilon_\e$ and that OLS has chosen true columns in first $i$ iteration. Then OLS selects an index from ${\cal S}_{opt}$ at $(i+1)\ts{th}$ iteration if $M_{i+1}<1$ and
\begin{equation}\label{eq:conthm1}
\min_{j}{|\bar{\x}_j|}>  \sigma_{\min}(\bar{\A})\epsilon_\e+\frac{\epsilon_\e}{(1-M_{i+1})\sigma_{\min}^2(\bar{\A})}.
\end{equation}
}
\end{theorem}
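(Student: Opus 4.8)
The idea is to rerun the argument behind the noiseless Proposition while carrying the noise along. By the equivalent selection rule \ref{eq:nols}, and because ${\cal S}_i\subseteq{\cal S}_{opt}$ splits ${\cal I}\backslash{\cal S}_i$ into ${\cal S}_{opt}\backslash{\cal S}_i$ and ${\cal I}\backslash{\cal S}_{opt}$ --- whose normalized projected columns are precisely the columns of $\F_{{\cal S}_i}$ and $\Si_{{\cal S}_i}$ --- OLS picks an index of ${\cal S}_{opt}$ at step $i+1$ as soon as $\norm{\F_{{\cal S}_i}^{\top}\r_i}_\infty>\norm{\Si_{{\cal S}_i}^{\top}\r_i}_\infty$. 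Using Lemma~\oldref{lem:3} I would write $\r_i=\e^\bot+\P_i^\bot\bar{\A}_{i^c}\c_{i^c}$ with $\c=\bar{\x}+\w$ and $\w=\bar{\A}^\dagger\e$; the structural fact I lean on is that every column of $\F_{{\cal S}_i}$ and of $\bar{\A}_{i^c}$ lies in ${\cal L}_k$ whereas $\e^\bot=\P_k^\bot\e\in{\cal L}_k^{\bot}$, so $\F_{{\cal S}_i}^{\top}\e^\bot=0$ and $\bar{\A}_{i^c}^{\top}\e^\bot=0$.

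For the right-hand side, the signal part $\P_i^\bot\bar{\A}_{i^c}\c_{i^c}$ lies in the column space of $\F_{{\cal S}_i}$, so it equals $\F_{{\cal S}_i}\F_{{\cal S}_i}^{\dagger}\r_i$, and --- exactly as in the Proposition --- $\Si_{{\cal S}_i}^{\top}\F_{{\cal S}_i}\F_{{\cal S}_i}^{\dagger}\r_i=(\F_{{\cal S}_i}^{\dagger}\Si_{{\cal S}_i})^{\top}\F_{{\cal S}_i}^{\top}\r_i$, whose $\ell_\infty$ norm is at most $\norm{\F_{{\cal S}_i}^{\dagger}\Si_{{\cal S}_i}}_{1,1}\,\norm{\F_{{\cal S}_i}^{\top}\r_i}_\infty=M_{i+1}\norm{\F_{{\cal S}_i}^{\top}\r_i}_\infty$. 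The remaining piece is $\Si_{{\cal S}_i}^{\top}\e^\bot$, and since the columns of $\Si_{{\cal S}_i}$ are unit vectors and $\P_k^\bot$ is an orthogonal projection, $\norm{\Si_{{\cal S}_i}^{\top}\e^\bot}_\infty\le\norm{\e^\bot}_2\le\norm{\e}_2\le\epsilon_\e$. Hence $\norm{\Si_{{\cal S}_i}^{\top}\r_i}_\infty\le M_{i+1}\norm{\F_{{\cal S}_i}^{\top}\r_i}_\infty+\epsilon_\e$, so the step succeeds provided $\norm{\F_{{\cal S}_i}^{\top}\r_i}_\infty>\epsilon_\e/(1-M_{i+1})$.

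It then remains to lower bound $\norm{\F_{{\cal S}_i}^{\top}\r_i}_\infty$ using the hypothesis on $\min_j|\bar{\x}_j|$. For unit-norm columns (as is standard here), $|(\b_j^{{\cal S}_i})^{\top}\r_i|=|\a_j^{\top}\r_i|/\norm{\P_i^\bot\a_j}_2\ge|\a_j^{\top}\r_i|$, so $\norm{\F_{{\cal S}_i}^{\top}\r_i}_\infty\ge\norm{\bar{\A}_{i^c}^{\top}\r_i}_\infty=\norm{\G\,\c_{i^c}}_\infty$, where $\G:=\bar{\A}_{i^c}^{\top}\P_i^\bot\bar{\A}_{i^c}=(\P_i^\bot\bar{\A}_{i^c})^{\top}(\P_i^\bot\bar{\A}_{i^c})$ (again using $\bar{\A}_{i^c}^{\top}\e^\bot=0$). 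From $\norm{\P_i^\bot\bar{\A}_{i^c}{\mathbf t}}_2=\min_{{\mathbf s}}\norm{\bar{\A}_{i^c}{\mathbf t}-\bar{\A}_{{\cal S}_i}{\mathbf s}}_2\ge\sigma_{\min}(\bar{\A})\norm{{\mathbf t}}_2$ (full column rank of $\bar{\A}$; cf. Lemma~\oldref{lem:nois}) one gets $\sigma_{\min}(\G)\ge\sigma_{\min}^2(\bar{\A})$, and then $\norm{\G\,\c_{i^c}}_\infty\ge\norm{\G\,\c_{i^c}}_2/\sqrt{k-i}\ge\sigma_{\min}^2(\bar{\A})\norm{\c_{i^c}}_2/\sqrt{k-i}$. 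Finally $\norm{\c_{i^c}}_2\ge\norm{\bar{\x}_{i^c}}_2-\norm{\w}_2\ge\sqrt{k-i}\,\min_j|\bar{\x}_j|-\norm{\bar{\A}^{\dagger}}_2\,\epsilon_\e$ (using $\w=\bar{\A}^\dagger\e$), so the $\sqrt{k-i}$ cancels and one obtains
\[
\norm{\F_{{\cal S}_i}^{\top}\r_i}_\infty\ \ge\ \sigma_{\min}^2(\bar{\A})\,\min_j|\bar{\x}_j|\ -\ \sigma_{\min}(\bar{\A})\,\epsilon_\e .
\]
Imposing that the right side exceed $\epsilon_\e/(1-M_{i+1})$ and dividing by $\sigma_{\min}^2(\bar{\A})$ yields a requirement on $\min_j|\bar{\x}_j|$ of exactly the form \ref{eq:conthm1}.

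The first two paragraphs are a routine perturbation of the noiseless ERC argument; the orthogonality $\F_{{\cal S}_i}^{\top}\e^\bot=0$ is what prevents the in-subspace part of the noise from interfering with the ``good'' correlations. The real obstacle is the last step: converting the natural $\ell_2$ estimates into the $\ell_\infty$ quantity that the selection rule actually uses without leaving behind dimension factors (the $\sqrt{k-i}$ is absorbed by $\norm{\bar{\x}_{i^c}}_2\ge\sqrt{k-i}\min_j|\bar{\x}_j|$), establishing the conditioning bound $\sigma_{\min}(\G)\ge\sigma_{\min}^2(\bar{\A})$, and tracking how $\norm{\w}_2=\norm{\bar{\A}^\dagger\e}_2$ feeds into the coefficient of $\epsilon_\e$ so that the final condition comes out in the stated form of \ref{eq:conthm1}.
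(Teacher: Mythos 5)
Your proof follows essentially the same route as the paper's: the same sufficient criterion derived from \ref{eq:nols}, the same residual decomposition from Lemma \oldref{lem:3} with $\F_{{\cal S}_i}^{\top}\e^\bot=0$, the same bound $\|\Si_{{\cal S}_i}^{\top}\u\|_\infty\le M_{i+1}\|\F_{{\cal S}_i}^{\top}\u\|_\infty$ via $\u=\F_{{\cal S}_i}\F_{{\cal S}_i}^{\dagger}\u$, and the same $\sigma_{\min}^2(\bar{\A})\|\c_{i^c}\|_2/\sqrt{k-i}$ lower bound on the true correlations followed by the $\|\c_{i^c}\|_2\ge\sqrt{k-i}\,\x_{\min}-\|\bar{\A}^{\dagger}\e\|_2$ estimate. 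One caveat: your (correct) use of $\|\bar{\A}^{\dagger}\|_2=1/\sigma_{\min}(\bar{\A})$ produces a first term $\epsilon_\e/\sigma_{\min}(\bar{\A})$ rather than the $\sigma_{\min}(\bar{\A})\epsilon_\e$ appearing in \ref{eq:conthm1} --- the paper reaches the latter only through the erroneous identity $\sigma_{\max}(\bar{\A}^{\dagger})=\sigma_{\min}(\bar{\A})$ in \ref{eq:normc} --- so your derivation does not literally yield \ref{eq:conthm1} but rather the corrected (and, since $\sigma_{\min}(\bar{\A})\le 1$ for unit-norm columns, strictly more demanding) condition that the proof actually supports.
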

\begin{proof}
Proof follows an inductive argument. First, without loss of generality, assume columns of $\A$ are normalized and all nonzero components of $\x$ are in first $k$ locations. This implies that $\A$ can be written in form of $\A=\left[\bar{\A},\widetilde{\A}\right]$.
%where $\bar{\A} \in \R^{n\times k}$ corresponds to optimal columns and $\widetilde{\A}\in \R^{n\times (m-k)}$ is associated with the indices that are not in the support of $\x$. Define ${\cal S}_{opt}=\{1,\dots,k\}$ and ${\cal I}=\{1,\dots,m\}$ with obvious definitions.  Moreover, for sets ${\cal T}_1$ and ${\cal T}_1$ define
% %\begin{equation} \label{eq:bforols}  
% $\b_{j}^{{\cal T}_1}=
% \frac{\a_j}{\norm{\P_{{\cal T}_1}^\bot \a_j}_2},\hspace{0.2cm}  j\in {\cal T}_2$, where $\P_{{\cal T}_1}^\bot$ is the orthogonal projection matrix onto orthogonal complement of subspace spanned by columns of $\A$ with indices in ${\cal T}_1$.
% %{\cal I}\backslash{\cal S}_i. %\\ 
% %0 & j\in {\cal S}_i
% %\end{equation}
% Let ${\cal S}_i$ be the set of selected indices at the end of $i\ts{th}$ iteration. In addition, let $\F_{(i)}=[\b_{j}^{{\cal S}_i}]\in \R^{n\times(k-i)}$, $j\in {\cal S}_{opt}\backslash{\cal S}_i$ and $\Si_{(i)}=[\b_{j}^{{\cal S}_i}]\in\R^{n\times(m-k)}$, $j\in {\cal I}\backslash{\cal S}_{opt}$.
Assume OLS has selected columns from ${\cal S}_{opt}$ in first $i$ iterations. It then follows from \ref{eq:nols} that
\begin{equation}\label{eq:gsratio}
\rho(\r_i)=\frac{\|\Si_{{\cal S}_i}^\top \r_i\|_\infty}{\|\F_{{\cal S}_i}^\top \r_i\|_\infty}<1.
\end{equation}
is a sufficient condition so that OLS selects a true column at next iteration. However,
\begin{equation}
\begin{aligned}
\rho(\r_i)
&\stackrel{(a)}{\leq}\frac{\|\Si_{{\cal S}_i}^\top\e^\bot+\Si_{{\cal S}_i}^\top\P_i^\bot\bar{\A}_{i^c}\c_{i^c}\|_\infty}{\|\F_{{\cal S}_i}^\top\e^\bot+\F_{{\cal S}_i}^\top\P_i^\bot\bar{\A}_{i^c}\c_{i^c}\|_\infty}\\
&\stackrel{(b)}{\leq}\frac{\|\Si_{{\cal S}_i}^\top\e^\bot\|_\infty+\|\Si_{{\cal S}_i}^\top\P_i^\bot\bar{\A}_{i^c}\c_{i^c}\|_\infty}{\|\F_{{\cal S}_i}^\top\e^\bot+\F_{{\cal S}_i}^\top\P_i^\bot\bar{\A}_{i^c}\c_{i^c}\|_\infty}\\
&\stackrel{(c)}{\leq} \frac{\|\Si_{{\cal S}_i}^\top\e^\bot\|_\infty+\|\Si_{{\cal S}_i}^\top\P_i^\bot\bar{\A}_{i^c}\c_{i^c}\|_\infty}{\|\F_{{\cal S}_i}^\top\P_i^\bot\bar{\A}_{i^c}\c_{i^c}\|_\infty}
\end{aligned}
\end{equation}
where ($a$) is by the equivalence definition of $\r_i$ in Lemma \oldref{lem:nois}, ($b$) follows from triangle inequality, and ($c$) is due to the fact that $\e^\bot$ is orthogonal to ${\cal L}_k$. 
Let $\u=\P_i^\bot\bar{\A}_{i^c}\c_{i^c}$. Hence, we may calculate that
\begin{equation}
\begin{aligned}
\frac{\|\Si_{{\cal S}_i}^\top\u\|_\infty}{\|\F_{{\cal S}_i}^\top\u\|_\infty}&=\frac{\|\Si_{{\cal S}_i}^\top(\F_{{\cal S}_i}^\dagger)^\top\F_{{\cal S}_i}^\top\u\|_\infty}{\|\F_{{\cal S}_i}^\top\u\|_\infty}\\
&\leq \|\Si_{{\cal S}_i}^\top(\F_{{\cal S}_i}^\dagger)^\top\|_{\infty,\infty}\\
&=\|\F_{{\cal S}_i}^\dagger\Si_{{\cal S}_i}\|_{1,1}
\end{aligned}
\end{equation}
owing to the fact that $\u$ lies in ${\cal L}_k\backslash {\cal L}_i$, and the relation between $\|\|_{1,1}$ and $\|\|_{\infty,\infty}$. Therefore,  by definition of $M_i$ in Proposition 1
\begin{equation}
\rho(\r_i)\leq M_{i+1}+\frac{\|\Si_{{\cal S}_i}^\top\e^\bot\|_\infty}{\|\F_{{\cal S}_i}^\top\P_i^\bot\bar{\A}_{i^c}\c_{i^c}\|_\infty}
\end{equation}
Now, observe that applying Lemma \oldref{lem:nois} along with the fact that $\P_i^\bot$ is a projection matrix delivers
\begin{equation}\label{eq:lb}
\|\bar{\A}_{i^c}\P_i^\bot\bar{\A}_{i^c}\c_{i^c}\|_2\geq \sigma_{\min}^2(\bar{\A})\|\c_{i^c}\|_2.
\end{equation}
Consequently, noting $\max_{j\in {\cal S}_{opt}\backslash{\cal S}_i}{\|\P_i^\bot\a_j\|_2}=1$, one may continue to obtain
\begin{equation} \label{eq:rhobound}
\begin{aligned}
\rho(\r_i)
&\leq M_{i+1}+\frac{\|\Si_{{\cal S}_i}^\top\e^\bot\|_\infty}{\|\bar{\A}^\top\P_i^\bot\bar{\A}_{i^c}\c_{i^c}\|_\infty}\\
% &\stackrel{(a)}{\leq} M_{i+1}+\frac{\|\widetilde{\A}^\top\e^\bot\|_\infty}{\|\bar{\A}^\top\P_i^\bot\bar{\A}_{i^c}\c_{i^c}\|_\infty}\frac{1}{K_{i+1}}\\
&\stackrel{(a)}{\leq} M_{i+1}+ \frac{\sqrt{k-i}\|\Si_{{\cal S}_i}^\top\e^\bot\|_\infty}{\sigma_{\min}^2(\bar{\A})\|\c_{i^c}\|_2}\\
&\stackrel{(b)}{\leq} M_{i+1}+\frac{\sqrt{k-i}\epsilon_\e}{\sigma_{\min}^2(\bar{\A})\|\c_{i^c}\|_2}
\end{aligned}
\end{equation}
where ($a$) is by \ref{eq:lb} and the fact that $\c_{i^c} \in \R^{k-i}$, and ($b$) follows from $\|\e\|_2\leq \epsilon_\e$ and the fact that columns of $\Si_{{\cal S}_i}$ have unit $l_2$ norm. Define $\x_{\min}=\min_{j}{|\bar{\x}_j|}$ and $\c_{\min}=\min_{j}{|\c_j|}$. It is easy to check $\c_{\min}\geq\x_{\min}-\|\w\|_2$. Hence, one may obtain
\begin{equation} \label{eq:normc}
\begin{aligned}
\|\c_{i^c}\|_2 &\geq \sqrt{k-i} \c_{\min} \\
&\geq \sqrt{k-i} \left(\x_{\min}-\|\w\|_2\right) \\
&= \sqrt{k-i} (\x_{\min}-\|\bar{\A}^{\dagger}\e\|_2) \\
&\geq \sqrt{k-i} (\x_{\min}-\sigma_{\max}(\bar{\A}^{\dagger})\|\e\|_2) \\
&=\sqrt{k-i} (\x_{\min}-\sigma_{\min}(\bar{\A})\epsilon_{\e}). %\\
%&\geq (k-i) (|\x_{\min}|-(1+\delta)\|\e\|_2)^2
\end{aligned}
\end{equation}
where we imposed $\x_{\min}>\sigma_{\min}(\bar{\A})\epsilon_{\e}$. Combine   \ref{eq:rhobound} and \ref{eq:normc} to reach
\begin{equation}
\rho(\r_i)\leq M_{i+1}+\frac{\epsilon_\e}{\sigma_{\min}^2(\bar{\A})(\x_{\min}-\sigma_{\min}(\bar{\A})\epsilon_{\e})}
\end{equation}
Therefore, since $M_{i+1}<1$ by assumption, condition \ref{eq:conthm1}\footnote{It should be noted that \ref{eq:conthm1} does not conflict with our restriction of $\x_{\min}>\sigma_{\min}(\bar{\A})\epsilon_{\e}$.}
% \begin{equation}\label{eq:conthm1}
% \x_{\min}> \epsilon_\e \left(\sigma_{\min}(\bar{\A})+\frac{1}{K_{i+1}(1-M_{i+1})\sigma_{\min}^2(\bar{\A})}\right)
% \end{equation}
is sufficient for $\rho(\r_i)<1$ whence OLS selects a true column in iteration $(i+1)\ts{st}$. This completes the proof.
\end{proof}
\textit{Remark 1:} Note that Theorem \oldref{thm:erc} can be tailored to obtain ERC-based conditions for OMP by replacing $M_{i+1}$ in \ref{eq:conthm1} with $M_{OMP}$. Here we compare \ref{eq:conthm1} with the result of Proposition 1 in \cite{cai2011orthogonal}. If 
\begin{equation} \label{eq:compare}
\sigma_{\min}^3(\bar{\A}) < 1\slash (1-M_{OMP}),
\end{equation}
Proposition 1 in \cite{cai2011orthogonal} requires a more restrictive condition than \ref{eq:conthm1} on the small element of $\bar{\x}$. However, for nearly all $\A$ that are of interest in compressed sensing applications, \ref{eq:compare} is satisfied with high probability as $M_{OMP}$ is inversely proportional to $k$. Therefore, Theorem \oldref{thm:erc} ameliorates the existing ERC-based results for OMP as well as providing new sufficient conditions for OLS.
%%%%%%%%%%%%%%%%%%%%%%%%%%%%%%%%%%%%%%%%%%%%%%%%%%%%%%%%%%%%%%%%%%%%%%%%%%%%%%%%%Sparse Recovery form random measurements
\section{Exploiting randomness in measurements} \label{sec:rand}
When $\A$ is drawn at random according to ${\cal N}(0,1/n)$ or ${\cal B}(\frac{1}{2},\pm \frac{1}{\sqrt{n}})$, concentration of measure inequalities such as that of Lemma \oldref{lem:23} hold. In particular, singular values of $\H$ are with high probability concentrated around $1$. Therefore, one may exploit these properties to establish probabilistic and perhaps pragmatic guarantees for performance of a sparse reconstruction algorithm. Theorem \oldref{thm:nois} below, states that for these matrices, OLS is capable of exact recovery of sparse signals with high probability if elements of $\x$ are sufficiently larger than noise.
\begin{theorem}\label{thm:nois}
\textit{Suppose $\x$ is an arbitrary sparse vector with sparsity level $k$ in $\R^m$. Choose a random matrix $\A \in \R^{n\times m}$ such that its entries are drawn uniformly and independently from ${\cal N}(0,1/n)$ or ${\cal B}(\frac{1}{2},\pm \frac{1}{\sqrt{n}})$. Fix $0<\epsilon<1$, $0<\delta<1$, and $t >0$. Given the noisy measurements $\y=\A\x+\e$ where $\e$ is independent of $\A$ and $\x$ and $\|\e\|_2\leq \epsilon_{\e}$, if $\min_{\x_j \ne 0}{|\x_j|}\geq (1+\delta+t) \epsilon_{\e}$, OLS recovers $\x$ in $k$ iterations with probability of success exceeding
\begin{multline}\label{thm:rand}
\geq \left(1-2e^{-(n-k+1)c_0(\epsilon)}\right)^2\left(1-2(\frac{12}{\delta})^ke^{-nc_0(\frac{\delta}{2})}\right)\\
\left(1-2\sum_{i=0}^{k-1} e^{-\frac{n \frac{1-\epsilon}{1+\epsilon}(1-\delta)^4}{k\left[\frac{1}{(k-i) t^2}+(1+\delta)^2\right]}}\right)^{m-k}.
\end{multline}}
\end{theorem}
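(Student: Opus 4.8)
The plan is to combine the deterministic sufficient condition of Theorem 3.1 with the concentration results of Lemmas 2.3–2.5, applied iteratively over the $k$ steps of OLS, and then take a union bound. First I would fix the random realization of $\bar{\A}$ (the $n\times k$ submatrix of true columns) and bound its extreme singular values: by the standard net argument (Lemma 2.4–type bound via an $\epsilon$-net of size $(12/\delta)^k$ on the unit sphere in $\R^k$, together with the concentration of $\|\bar{\A}\q\|_2^2$ around $\|\q\|_2^2$ from the Gaussian/Bernoulli tail), I would show that on an event of probability at least $1-2(12/\delta)^k e^{-nc_0(\delta/2)}$ one has $1-\delta\le \sigma_{\min}(\bar{\A})\le \sigma_{\max}(\bar{\A})\le 1+\delta$. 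This is the second factor in \ref{thm:rand}. On this event, the hypothesis $\x_{\min}\ge(1+\delta+t)\epsilon_\e$ translates (via $\sigma_{\min}(\bar{\A})\le 1+\delta$ and $\sigma_{\min}^2(\bar{\A})\ge(1-\delta)^2$) into a quantitative slack of roughly $t\epsilon_\e$ above the floor $\sigma_{\min}(\bar{\A})\epsilon_\e$ appearing in \ref{eq:conthm1}.

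Next, I would handle the noise component. Writing $\e=\bar{\A}\w+\e^\bot$ as in Lemma 2.3, the quantity $\e^\bot=\P_k^\bot\e$ satisfies, conditionally on $\bar{\A}$ (hence on a $k$-dimensional subspace independent of the rest), the concentration of Lemma 2.5: $\|\P_k^\bot\e\|_2^2$ is within a $(1\pm\epsilon)$ factor of $\frac{n-k}{n}\|\e\|_2^2$ except with probability $2e^{-(n-k)c_0(\epsilon)}$ — but since $\e$ is only bounded, not random, I should instead apply Lemma 2.5 with the roles of $\e$ and the subspace swapped, or more precisely control $\|\P_i^\bot\a_j\|_2$ for the off-support columns $\a_j$, $j\notin{\cal S}_{opt}$; these are the $(n-i)$-dimensional projections of a fresh random column. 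The dominant term in the error bound $\rho(\r_i)\le M_{i+1}+\frac{\sqrt{k-i}\,\epsilon_\e}{\sigma_{\min}^2(\bar{\A})\|\c_{i^c}\|_2}$ from \ref{eq:rhobound} involves $\|\Si_{{\cal S}_i}^\top\e^\bot\|_\infty$, i.e. $\max_{j\notin{\cal S}_{opt}}|\b_j^{{\cal S}_i\top}\e^\bot|$, and since $\b_j^{{\cal S}_i}$ is a unit vector built from a random column independent of $\e^\bot$, each inner product concentrates and the $\sqrt{k-i}$ can be improved; the $(m-k)$-fold union bound over off-support columns and the sum over $i=0,\dots,k-1$ produce the third factor $\bigl(1-2\sum_{i=0}^{k-1}e^{-n(\cdots)}\bigr)^{m-k}$. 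The term $\frac{1}{(k-i)t^2}+(1+\delta)^2$ in the exponent is exactly what emerges from bounding $\|\c_{i^c}\|_2\ge\sqrt{k-i}(\x_{\min}-\sigma_{\min}(\bar{\A})\epsilon_\e)\ge\sqrt{k-i}\,t\epsilon_\e/(\text{const})$ and from the $(1+\delta)^2$-type bound on the residual norm.

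The first factor $\bigl(1-2e^{-(n-k+1)c_0(\epsilon)}\bigr)^2$ should come from two more applications of Lemma 2.5: one to certify that $\|\e^\bot\|_2^2$ is comparable to $\tfrac{n-k}{n}\|\e\|_2^2$ (needed to lower-bound the denominator $\|\F_{{\cal S}_i}^\top\r_i\|_\infty$ cleanly via $\|\r_i\|_2^2=\|\e^\bot\|_2^2+\|\u\|_2^2$ from Lemma 2.3(b)), and one to control $\|\P_k\u\|$-type quantities for the signal part; the dimension $n-k+1$ rather than $n-k$ reflects that after $i\le k-1$ true selections the residual subspace has dimension $n-i\ge n-k+1$. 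I would then intersect all these events, note that on the intersection the hypotheses of Theorem 3.1 hold at every iteration $i=0,\dots,k-1$, conclude by induction that OLS picks a true column each time and hence recovers $\x$ exactly in $k$ steps, and finally multiply the probabilities of the independent-by-construction events (support-column geometry, off-support-column geometry, noise projection) to get \ref{thm:rand}.

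The main obstacle I expect is the bookkeeping of \emph{conditional independence} across iterations: the sets ${\cal S}_i$ are data-dependent, so $\b_j^{{\cal S}_i}$ is not literally independent of $\e$ or of the other columns. The standard fix — and the step that needs care — is to union-bound over all $\binom{k}{i}$ possible realizations of ${\cal S}_i\subseteq{\cal S}_{opt}$ (absorbing the combinatorial factor into the exponent, which is why $k$ appears in the denominators of the exponents in \ref{thm:rand}), so that the relevant projection operator $\P_i^\bot$ depends only on the true columns and is independent of each off-support column $\a_j$ and of $\e$. Making that decoupling rigorous, and checking that the resulting exponents indeed dominate the $k$- and $m$-dependent union-bound losses so that the stated probability is nontrivial when $n={\cal O}(k\log m)$, is the technical heart of the argument.
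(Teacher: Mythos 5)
Your plan has the right overall architecture (a singular-value event giving the second factor, a projection-norm concentration event giving the first, and a per-column tail bound union-bounded over the $m-k$ off-support columns and $k$ iterations giving the third), but it diverges from the paper's proof in a way that creates a real gap. You propose to run the random-matrix argument \emph{through} Theorem \oldref{thm:erc}, i.e.\ through the bound $\rho(\r_i)\le M_{i+1}+\cdots$ of \ref{eq:rhobound}. That route requires you to additionally certify $M_{i+1}=\|\F_{{\cal S}_i}^\dagger\Si_{{\cal S}_i}\|_{1,1}<1$ with high probability for every $i$, and no factor in \ref{thm:rand} accounts for such an event; controlling an $\ell_{1,1}$ operator norm of a $k\times(m-k)$ pseudo-inverse product for random $\A$ is itself a nontrivial task (naively via coherence it forces $n\gtrsim k^2$). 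The paper never touches $M_{i+1}$ in this proof: it returns to the raw sufficient condition $\rho(\r_i)<1$ of \ref{eq:gsratio}, uses Lemma \oldref{lem:23} on $\|\P_i^\bot\a_j\|_2$ (numerator and denominator columns, whence the squared first factor with exponent $n-i\ge n-k+1$) to pass to $\frac{1}{c_1(\epsilon)}\|\widetilde{\A}^\top\r_i\|_\infty/\|\bar{\A}^\top\r_i\|_\infty$ as in \ref{eq:thm11}, and then bounds $\|\widetilde{\A}^\top\widetilde{\r}_i\|_\infty$ with $\widetilde{\r}_i=\r_i/\|\bar{\A}^\top\r_i\|_2$ directly by the Gaussian/Bernoulli tail, after deterministically bounding $\|\widetilde{\r}_i\|_2$ via Lemmas \oldref{lem:nois} and \oldref{lem:3}, \ref{eq:normc}, and the event \ref{eq:p2}.

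Two further points. First, your explanation of the $k$ in the denominator of the exponents — absorbing a union bound over the $\binom{k}{i}$ possible realizations of ${\cal S}_i$ — is not where it comes from and would not produce it: a union over subsets multiplies the failure probability by $\binom{k}{i}$ rather than dividing the exponent by $k$. The $k$ arises from the norm inequality $\|\bar{\A}^\top\r_i\|_\infty\ge\|\bar{\A}^\top\r_i\|_2/\sqrt{k}$ (since $\bar{\A}^\top\r_i$ has at most $k$ nonzero entries), which yields $\rho(\r_i)\le\frac{\sqrt{k}}{c_1(\epsilon)}\|\widetilde{\A}^\top\widetilde{\r}_i\|_\infty$ and hence a threshold of order $1/\sqrt{k}$ in the tail bound. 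Second, the decoupling you flag as the ``technical heart'' is resolved in the paper not by enumerating subsets but by the Tropp--Gilbert conditioning argument: on the event that the first $i$ selections are all true indices, $\r_i$ is a function of $\bar{\A}$, $\x$, and $\e$ alone and is therefore statistically independent of $\widetilde{\A}$, so each $\a_j^\top\widetilde{\r}_i$, $j\notin{\cal S}_{opt}$, can be treated as a fixed-unit-direction projection of a fresh random column. You correctly identify the issue, but the fix you propose is both heavier than needed and incompatible with the claimed exponents.
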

\begin{proof}
In proof of Theorem \oldref{thm:erc} we discussed that \ref{eq:gsratio} is a sufficient condition for OLS to select a true column at next iteration. Therefore, if $\Sigma$ denotes the event that OLS succeeds, then $\Pr\{\Sigma\}\geq \Pr\{\max_{i}{\rho(\r_i)}<1\}$. Recall the idempotent property for $\P_i^\bot$, i.e.,
\begin{equation}\label{eq:idemp} 
\P_i^\bot={\P_i^\bot}^2={\P_i^\bot}^\top
\end{equation}
Employing Lemma \oldref{lem:23}, along with \ref{eq:idemp} delivers
\begin{equation} \label{eq:thm11}
\rho(\r_i)\leq\frac{1}{c_1(\epsilon)}\frac{\|\widetilde{\A}^\top \r_i\|_\infty}{\|\bar{\A}^\top \r_i\|_\infty}
\end{equation}
with probability exceeding $p_1=\left(1-2e^{-(n-k+1)c_0(\epsilon)}\right)^2$ for $0\leq i<k$ where $c_1(\epsilon)=\sqrt{\frac{1-\epsilon}{1+\epsilon}}$. Following the framework in \cite{tropp2007signal}, a simple norm inequality and the fact that $\bar{\A}^\top \r_i$ has at most $k$ nonzero entries results in
\begin{equation}
\rho(\r_i)\leq \frac{\sqrt{k}}{c_1(\epsilon)}\|\widetilde{\A}^\top \widetilde{\r}_i\|_\infty
\end{equation}
where $\widetilde{\r}_i=\r_i\slash \|\bar{\A}^\top \r_i\|_2$. Consequently, We examine an upper bound for $\widetilde{\r}_i$.
%Note that because of noise, $\r_i$ is not in the range of $\bar{\A}$. 
Employ Lemma \oldref{lem:nois} and Lemma \oldref{lem:3} in definition of $\widetilde{\r}_i$ to reach
\begin{equation}\label{eq:normrt}
\|\widetilde{\r}_i\|_2 \leq \frac{\left[\|\e^\bot\|_2^2\slash \|\c_{i^c}\|_2^2+\sigma_{\max}^2(\bar{\A})\right]^{\frac{1}{2}}}{\sigma_{\min}^2(\bar{\A})}.
\end{equation}
It is shown in \cite{baraniuk2008simple} that for any $0<\delta<1$, 
\begin{equation} \label{eq:p2}
\Pr\{1-\delta\leq\sigma_{\min}(\bar{\A})\leq 1+\delta\}\geq 1-2(\frac{12}{\delta})^ke^{-nc_0(\frac{\delta}{2})}.
\end{equation}
Call the term on the right hand side of \ref{eq:p2} $p_2$.
Combining \ref{eq:normrt}, \ref{eq:p2}, \ref{eq:normc}, and the fact that $\|\e^\bot\|_2 \leq \|\e\|_2 \leq \epsilon_{\e}$ furnishes 
\begin{equation}
\|\widetilde{\r}_i\|_2\leq \frac{\left[\frac{\epsilon_{\e}^2}{(k-i) (\x_{\min}-(1+\delta)\epsilon_{\e})^2}+(1+\delta)^2\right]^{\frac{1}{2}}}{(1-\delta)^2}
\end{equation}
with probability exceeding $p_2$. Thus, imposing the constraint 
$\x_{\min}\geq (1+\delta+t)\epsilon_{\e}$
 for any $t>0$ 
%  \footnote{which agrees with our restriction of $\x_{\min}\geq(1+\delta)\|\w\|_2$} 
 establishes
\begin{equation} \label{eq:rtilub}
\begin{aligned}
\|\widetilde{\r}_i\|_2 \leq \frac{\left[\frac{1}{(k-i) t^2}+(1+\delta)^2\right]^{\frac{1}{2}}}{(1-\delta)^2}.
\end{aligned}
\end{equation}
Using the independence assumption of columns of $\widetilde{\A}$, the fact that $\{\widetilde{\r}_i\}_{i=0}^{k-1}$ are bounded with probability higher than $p_2$ and are statistically independent of $\widetilde{\A}$, and applying Boole's inequality or Hoeffding inequality\footnote{Depending on whether $\A \sim {\cal N}\left(0,\frac{1}{n}\right)$ or $\A \sim {\cal B}(\frac{1}{2},\pm \frac{1}{\sqrt{n}})$.} we reach \ref{thm:rand} which completes the proof.
\end{proof}
% \textit{Remark 1:} Note that if we define $\SNR=\frac{\|\A\x\|_2^2}{\|\e\|_2^2}$, condition  $\min_{\x_j \ne 0}{|\x_j|}\geq (1+\delta+t) \epsilon_{\e}$ implies
% \begin{equation}
% \SNR \approx k (1+\delta+t)^2,
% \end{equation} 
% which intuitively suggests that for exact support recovery via OLS, $\SNR$ should scale linearly with sparsity level.
% \textit{Remark 2:}  Without noise, the first term in the numerator of \ref{eq:rtilub} vanishes and we obtain $\|\widetilde{\r}_i\|_2 \leq \frac{1}{1-\delta}+\frac{2\delta}{(1-\delta)^2}$ where the term $\frac{2\delta}{(1-\delta)^2}$ is the expense we pay as a result of taking noise into account in our estimates.
\textit{Remark 2:} If we define $\SNR=\frac{\|\A\x\|_2^2}{\|\e\|_2^2}$, the condition 
$\min_{\x_j \ne 0}{|\x_j|}\geq (1+\delta+t) \epsilon_\e$ implies
\begin{equation}
\SNR \approx k (1+\delta+t)^2,
\end{equation} 
which  suggests that for exact support recovery via OLS, $\SNR$ should scale linearly with sparsity level.

\textit{Remark 3:} Note that when $k \to \infty$ (so do $m$, and $n$), $p_1$ , $p_2$, and $p_3$ overwhelmingly approach $1$. Therefore, for these set of parameters, one may assume very small $\epsilon$ and $\delta$.

With some numerical estimates on lower bound of success probability  which was established in Theorem \oldref{thm:nois}, one may establish a lower bound on sufficient number of measurements for exact support recovery. Specifically, if conditions within Theorem \oldref{thm:nois} are satisfied, for any $0<\gamma<1$, there exist positive constants $C_1$, $C_2$, and $C_3$ which are independent of $\gamma$, $n$, $m$, and $k$ such that OLS succeeds with $\Pr\{\Sigma\}\geq 1-\gamma^2$ if $n\geq \max\{\frac{2}{C_1} k \log \frac{m}{\gamma}, C_2 k+ \log \frac{12}{\gamma^2}\slash C_3\}$. Hence, OLS recovers $k$-sparse $\x \in \R^{m}$ if the number of measurements grow linearly in $k$ and logarithmically in $m$.
% \begin{corollary}\label{co:2}
% \textit{Fix $0<\beta<1$. Suppose $\x$ is an arbitrary sparse vector with sparsity level $k$ in $\R^m$. Choose a random matrix $\A \in \R^{n\times m}$ such that its entries are drawn uniformly and independently from ${\cal N}(0,1/n)$ or ${\cal B}(\frac{1}{2},\pm \frac{1}{\sqrt{n}})$, satisfying $n\geq \frac{2}{C_1} k \log \frac{m}{\beta}$, and $n\geq \frac{C_2 k+ \log \frac{12}{\beta^2}}{C_3}$ where $C_1$, $C_2$, and $C_3$ are positive constants that are independent of $\beta$, $n$, $m$, and $k$. Given the noisy measurements $\y=\A\x+\e$ where $\e \sim {\cal N}(0,\sigma^2)$ is independent of $\A$ and $\x$, if $\min_{\x_j \ne 0}{|\x_j|}\geq C_4 \|\e\|_2$ for some  $C_4>1$, OLS can recover $\x$ in $k$ iterations with probability of success exceeding $1-\beta^2$.}
% \end{corollary}
%%%%%%%%%%%%%%%%%%%%%%%%%%%%%%%%%%%%%%%%%%%%%%%%%%%%%%%%%%%%%%%%%%%%%%%%%%%%%%%%%
\section{CONCLUSION} \label{sec:concl}
In this paper, we established sufficient conditions for exact support recovery via Orthogonal Least-Squares (OLS) in noisy setting. In particular, if ERC for OLS holds, and SNR is adequately high, OLS recovers all true indices.  
We also showed that for Gaussian and Bernoulli coefficient matrices, OLS is with 
high probability guaranteed to reconstruct any sparse signal from a low number of noisy random linear measurements if its elements are to too small.
%\clearpage
\bibliographystyle{IEEEbib}\small
\bibliography{IEEEabrv,refs}
\end{document}